\documentclass[twoside]{article}

\usepackage[accepted]{aistats2023}
%
%




\usepackage[utf8]{inputenc} 
\usepackage[T1]{fontenc}    
\usepackage{hyperref}       
\usepackage{url}            
\usepackage{booktabs}       
\usepackage{amsfonts}       
\usepackage{nicefrac}       
\usepackage{microtype}      
\usepackage{xcolor}         
\usepackage{graphicx}
\usepackage{subcaption}
\usepackage{amssymb, amsmath, amsthm, paralist,color}
\usepackage{enumitem}
\usepackage{dsfont}
\usepackage{multirow}
\usepackage{comment}
\usepackage{caption}
\captionsetup[table]{position=top}
\usepackage{float}
\floatstyle{plaintop}
\usepackage{natbib}
\restylefloat{table}
\usepackage{dsfont}

\newtheorem{theorem}{Theorem}
\newtheorem{corollary}{Corollary}
\newtheorem{definition}{Definition}

\newcommand{\bbE}{\mathbb{E}}
\newcommand{\bbR}{\mathbb{R}}
\newcommand{\bbP}{\mathbb{P}}

\newcommand{\be}{\mathbf{e}}
\newcommand{\bx}{\mathbf{x}}
\newcommand{\by}{\mathbf{y}}

\newcommand{\calR}{\mathcal{R}}

\newcommand{\calA}{\mathcal{A}}

\newcommand{\calD}{\mathcal{D}}

\newcommand{\calK}{\mathcal{K}}
\newcommand{\calM}{\mathcal{M}}
\newcommand{\calN}{\mathcal{N}}

\newcommand{\calY}{\mathcal{Y}}
\newcommand{\calX}{\mathcal{X}}

\newcommand*\diff{\mathop{}\!\mathrm{d}}

\DeclareMathOperator*{\argmax}{arg\,max}

\newenvironment{manualtheorem}[1]{%
  \manualtheoreminner
}{\endmanualtheoreminner}

\begin{document}

%

%

\twocolumn[

\aistatstitle{Does Label Differential Privacy Prevent Label Inference Attacks?}

\aistatsauthor{ Ruihan Wu$^*$ \And Jin Peng Zhou$^*$ \And  Kilian Q. Weiberger \And Chuan Guo }

\aistatsaddress{ Cornell University \And  Cornell University \And Cornell University \And Meta AI} ]

%

\begin{abstract}
    Label differential privacy (label-DP) is a popular framework for training private ML models on datasets with public features and sensitive private labels. Despite its rigorous privacy guarantee, it has been observed that in practice label-DP \emph{does not} preclude label inference attacks (LIAs): Models trained with label-DP can be evaluated on the public training features to recover, with high accuracy,  the very private labels that it was designed to protect. 
In this work, we argue that this phenomenon is not paradoxical and that label-DP is designed to limit the \emph{advantage} of an LIA adversary compared to predicting training labels using the \emph{Bayes classifier}. At label-DP $\epsilon=0$ this advantage is zero, hence the optimal attack is to predict according to the Bayes classifier and is independent of the training labels. 
Our bound shows the semantic protection conferred by label-DP and gives guidelines on how to choose $\varepsilon$ to limit the threat of LIAs below a certain level.
Finally, we empirically demonstrate that our result closely captures the behavior of simulated attacks on both synthetic and real world datasets.
\end{abstract}
\section{INTRODUCTION}
Differential privacy (DP)~\citep{dwork2006calibrating, dwork2014algorithmic} has become the foundational tool for private learning on sensitive training data.
More recently, this framework has been adopted for training \emph{label differentially private} (label-DP) models~\citep{pmlr-v19-chaudhuri11a, ghazi2021deep, esmaeili2021antipodes}, where only the label of a training sample is considered sensitive and must be protected. One prominent application for label-DP is online advertisement, where the learning goal is to predict whether a user clicked on an ad or not, which is a private and sensitive label, given the product description for the displayed ad.

Intuitively, label-DP presents an easier task for the learner compared to DP since the training features are assumed to be public. Indeed, prior work showed that label-DP learning algorithms can achieve much higher test accuracy compared to the best DP counterparts on benchmark datasets. However, such models also attain a high accuracy on the training set, which enables an adversary to simply evaluate the model on the public training features to (correctly) predict the private labels~\citep{busa2021pitfalls}---a method that we refer to as the \emph{simple prediction attack} (SPA). The existence of such a paradoxical adversary raises the question of whether label-DP is truly a meaningful privacy notion to strive for.

In this paper, we take a closer look at the connection between label-DP and label inference attacks (LIAs).
We first show that label-DP is unable to upper bound the accuracy of LIAs under arbitrarily small values of the privacy parameter $\epsilon$.
This limitation applies not only to label-DP, but \emph{any} model that generalizes will inevitably enable the SPA attack to attain high label inference accuracy.
In the extreme case where the learning algorithm perfectly generalizes, the output model becomes the Bayes classifier and the SPA attack's accuracy is determined entirely by the Bayes error rate, which is independent of the training labels.

Our analysis suggests that it is unreasonable to equate label privacy with limiting the accuracy of LIAs in absolute terms. At a high level, such an argument is in-line with the design principle of DP as not to protect against statistical inference~\citep{mcsherry2016statistical, bun2021statistical}. Instead, we consider the \emph{advantage} of an LIA adversary over predicting training labels according to the Bayes classifier. Such advantage can only have originated from memorizing the training set and therefore leakage of private labels, and 
vice versa an adversary with zero advantage is no better than the Bayes classifier that is completely independent of the training labels. Under this analytical framework, we show that an $\varepsilon$-label-DP learner can reduce this advantage to $1- \frac{2}{1 + e^{\varepsilon}}$.
Importantly, our bound shows that at low $\varepsilon$, even if an label-DP learner achieves high training accuracy, it does not necessarily reveal any sensitive information about the training labels---resolving the aforementioned paradox.
Our bound gives semantic meaning to the label-DP $\varepsilon$ and can be used as a guideline for calibrating the value of $\varepsilon$ for practical use cases.

We empirically validate the advantage upper bound on both simulated and real world datasets. On the simulated dataset where the Bayes classifier is known, our upper bound dominates advantage of the SPA attack and is fairly tight at both small and large $\varepsilon$ values. We also evaluate on the Criteo 1TB Click Logs dataset~\citep{tallis2018reacting}, which closely resembles the learning setting in common applications of label-DP where the ground truth label is very noisy and the marginal label distribution is highly imbalanced. Our result shows that advantage of the SPA attack becomes negative at even moderate values of the privacy parameter $\varepsilon$ despite the attack attaining close to $97\%$ label inference accuracy.

\section{PRELIMINARIES}
\label{sec:preliminaries}
\paragraph{Notations.} Let $\calX, \calY$ denote the feature and label space, respectively, and let $D = (X, \by) \in (\calX \times \calY)^n$ be a training dataset consisting of $n$ training samples.
Let $\calD$ be the underlying data distribution.
For $i=1,\dots,n$, $X_{-i}\in \calX^{n-1}$ denotes the training features except for the $i$th sample.

\textbf{Differential privacy} 
(DP)~\citep{dwork2006calibrating, dwork2014algorithmic} is a standard tool for privacy-preserving data analysis that hides the contribution of any individual training sample to the mechanism's output. In the context of machine learning, this is achieved by randomizing the learner's output and requiring that replacing one data point by another does not lead to a significant change in the output distribution. We restate its formal definition below.

\begin{definition}[$(\varepsilon,\delta)$-Differential Privacy]
	Let $\varepsilon, \delta\in \bbR^{\geq0}$. A randomized training algorithm $\calM: (\calX\times\calY)^n \to \cal R$ with domain $(\calX\times\calY)^n$ and range $\calR$ satisfies $(\varepsilon,\delta)$-differential privacy if for any two adjacent datasets $D, D'\in(\calX\times\calY)^n$, 
    which differ at exactly one data point $(\bx, y)$, 
	and for any subset of outputs $S\subseteq \calR$, it holds that:
	$$
	\bbP[\calM(D)\in S] \leq e^{\varepsilon}\cdot \bbP[\calM(D')\in S] + \delta.
	$$
\end{definition}

\textbf{Label differential privacy} 
(label-DP)~\citep{pmlr-v19-chaudhuri11a} is a relaxation of DP where only the privacy of training \emph{labels} must be protected. 
This setting assumes that the training features are public and/or non-sensitive but the labels are sensitive and are kept secret. Such a scenario arises naturally in several common applications of ML:

1. In online advertising, ads are selected by an ML model for display to maximize click-through rate (CTR)---the percentage of users that will click on the ad~\citep{richardson2007predicting, mcmahan2013ad, chapelle2014simple}. The model is trained on features such as product and advertiser description, and a binary label of whether the user clicked on a displayed ad or not. In this application, features are publicly accessible and non-sensitive, but the label indicates user interest and is considered sensitive and private.

2. In recommendation systems, the learning goal is to suggest products or webpages to a user based on features such as user profile, search query, and descriptions of products/webpages, which are available to the recommender~\citep{ricci2011introduction}. The training labels are historical data of user rating or click and are considered private.

The existence of a label-only privacy setting motivates the study of label-DP.
Different from DP, the notion of adjacency applies only to the label of a single training sample: $D$ and $D'$ are identical except for one data point $(\bx, y) \in D$ and $(\bx, y') \in D'$; see below for a formal definition.

\begin{definition}[$(\epsilon,\delta)$-Label Differential Privacy]
\label{def:label_dp}
Let $\varepsilon, \delta\in \bbR^{\geq0}$. A randomized training algorithm $\calM$ taking as input a dataset is said to be $(\varepsilon, \delta)$-label differentially private ($(\varepsilon, \delta)$-label-DP) if for any two training datasets $D$ and $D'$ that differ in the label of a single example, and for any subset $S$ of outputs of $\calM$, it holds that 
$$\bbP(\calM(D)\in S)\leq e^{\varepsilon}\cdot \bbP(\calM(D')\in S) + \delta.$$
When $\delta=0$, we simply refer to $\calM$ as $\epsilon$-label-DP.
\end{definition}

\paragraph{Label-DP learning algorithms.}
The first mechanism for achieving label differential privacy is \emph{randomized response} (RR)~\citep{warner1965randomized}, which (with a certain probability) randomly samples training labels according to a pre-determined distribution before releasing them to the learner. Recent works proposed several label-DP learning algorithms that are inspired by RR:

1. \emph{Label Private Multi-Stage Training} (LP-MST; \citep{ghazi2021deep}) randomly samples training labels $\by_i$ using a learned prior sampling distribution $\bbP(y|X_i)$ instead of the pre-determined distribution in RR. Such a prior could be learned by observing the top-K predictions using a pre-trained model and limiting RR to this subset of most likely labels. An alternative way is to divide the training process into multiple stages and leverage the model trained in the previous stage as the prior for predicting the top-K labels.

2. \emph{Private Aggregation of Teacher Ensembles with FixMatch} (PATE-FM; \citep{esmaeili2021antipodes}) uses FixMatch~\citep{sohn2020fixmatch}---a semi-supervised learning algorithm---to train several teacher models for private aggregation. Each teacher is trained on all training features together with a subset of revealed labels, with this subset disjoint among different teachers.
Finally, a student model is trained using PATE~\citep{papernot2016semi} to predict differentially privately aggregated labels from the teachers' predictions given public training features.

3. \emph{Additive Laplace Noise Coupled with Bayesian Inference} (ALIBI; \citep{esmaeili2021antipodes}) releases differentially private training labels by perturbing one-hot encodings of the labels using the Laplace mechanism~\citep{ghosh2012universally}.
Since post-processing preserves differential privacy~\citep{dwork2014algorithmic}, the resulting noisy labels can then be denoised using Bayesian inference to maximize the probability of recovering the clean label.
\section{DOES LABEL-DP PREVENT LABEL INFERENCE ATTACKS?}
\label{sec:setup}

\begin{table*}[t]
    \centering
    \resizebox{\linewidth}{!}{
    \begin{tabular}{c|cccc|cccc}
    \toprule
       \multirow{2}{*}{\textbf{Algorithm}} & \multicolumn{2}{c}{\textbf{MNIST} ($\varepsilon=1.0$)} & \multicolumn{2}{c|}{\textbf{MNIST} ($\varepsilon=0.1$)} & \multicolumn{2}{c}{\textbf{CIFAR10} ($\varepsilon=1.0$)} & \multicolumn{2}{c}{\textbf{CIFAR10} ($\varepsilon=0.1$)} \\
    \cmidrule{2-9}
        & Test Acc. & Attack Acc. & Test Acc. & Attack Acc. & Test Acc. & Attack Acc. & Test Acc. & Attack Acc. \\
    \midrule
       LP-1ST & 93.3 & 93.3 & 20.8 & 20.9 & 61.5 & 61.9 & 15.5 & 15.8\\
       LP-1ST ({\small in-domain prior})  & 97.1 & 96.5 & 97.0 & 96.2 & 75.4 & 75.7 & 66.3 & 66.3 \\
       LP-1ST ({\small out-of-domain pior}) & 94.6 & 93.7 & 86.2 & 85.2  & 89.5 & 89.8 & 87.6 & 86.9 \\
       PATE-FM & 99.3 & 99.1 & 23.6 & 23.0 & 92.4 & 92.1 & 18.6 & 18.6 \\
       ALIBI & 96.3 & 96.3 & 21.5 & 20.8 & 67.5 & 69.6 & 13.6 & 13.9 \\
    \bottomrule
    \end{tabular}
    }
    \caption{Model test accuracy and attack accuracy of the simple prediction attack (SPA) evaluated on label-DP models trained on MNIST and CIFAR10. SPA attack accuracy is equivalent to training accuracy for classification and is exceptionally high in most cases. Our evaluation shows that a learning algorithm can offer a very stringent label-DP guarantee of $\varepsilon=0.1$ while failing to prevent label inference attacks.}
    \label{tab:empi_eval_ldp_algo}
\end{table*}

Relaxing DP to label-DP provided the flexibility for designing more specialized private learning algorithms. These methods seem to provide excellent trade-offs between privacy and model utility, as measured by their high test accuracy even at very low $\varepsilon$, $\delta$ values.
In this section, we take a closer look at the privacy protection offered by label-DP. We argue that not only is label-DP unable to prevent adversaries from inferring the training labels under arbitrarily low values of privacy parameter $\varepsilon > 0$, \emph{any} model that generalizes will inevitably fail to do so as well.

\subsection{Label Inference Attack against Label-DP}
\label{sec:paradox_label_dp}

\paragraph{Label Inference Attack in Vertical Federated Learning.} In the setting of \textit{Vertical Federated Learning} (VFL), one party owns the features and the other party owns the labels. The objective is to jointly train a model without leaking private information about the labels. One way to achieve this is using split learning, where the party with features holds the first several layers of the model and the party with labels holds the remaining layers of the model, and the exchange of gradients at the split layer is protected by label-DP. However, \citet{sun2022label} showed that even when the label-DP $\varepsilon$ is as small as $0.5$, the party with the features can still infer the private labels with a prediction AUC of $0.75$.

\paragraph{Label Inference Attack in Label-DP Model Training.} We first make the observation that the label-DP guarantee \emph{does not} imply that an adversary cannot leverage the model to infer its training labels. In fact, if the training accuracy is high, the adversary can trivially evaluate the model on the public training set and recover its labels~\citep{busa2021pitfalls}. We refer to this attack as the \emph{simple prediction attack} (SPA) and evaluate it on existing label-DP learning algorithms.

\autoref{tab:empi_eval_ldp_algo} shows test accuracy for several label-DP models trained on MNIST~\citep{lecun1998gradient} and CIFAR10~\citep{krizhevsky2009learning}, along with the corresponding SPA attack accuracy, \emph{i.e.}, the model's training accuracy. There is a clear trend that these algorithms can achieve very high test accuracies with strong label-DP guarantee, \emph{i.e.}, low privacy parameter $\varepsilon$.
For instance, at $\varepsilon=0.1$, the test accuracy could reach as high as $97.0$ for MNIST and $87.6$ for CIFAR10.
However, the SPA attack accuracy is almost identical to the model's test accuracy, which (paradoxically) seems to suggest that models leak a tremendous amount of label information even when trained with stringent label-DP guarantees.

\subsection{Impossibility of Label Protection under Label-DP}
\label{sec:impossibility_ldp}

Following the above observation, a natural question to ask is whether the vulnerability of label-DP to the existing label inference attack is due to an insufficiently strong privacy guarantee, \emph{i.e.}, $\varepsilon,\delta$ being not small enough. We give a definitive negative answer by formalizing \emph{label inference attacks} (LIAs) and showing that label-DP cannot guarantee protection against LIAs even for arbitrarily small values of $\varepsilon,\delta > 0$.

\paragraph{Threat model.}
The adversary's goal is to design a label inference attack algorithm $\calA$ that infers the training label of each sample in the training dataset.
The output of $\calA$ is a vector of inferred labels $\hat{\by} \in \calY^{n}$. We assume that the adversary has access to the following information:
\begin{enumerate}[leftmargin=*,nosep]
    \item The adversary has full knowledge of the output $o = \calM(X, \by)$, where $\calM$ is any releasing algorithm. The output $o$ could be the model when we consider the label-DP model training. It could also be a sequential of message passed between parties when label-DP is applied in the federated learning setting.
    \item The adversary has full knowledge of the feature matrix $X\in\bbR^{n\times d}$.
    \item (Optional) The adversary has knowledge of the conditional data distribution $\bbP(y|\bx)$ of $\calD$.
\end{enumerate}

We refer to the threat model with or without the third assumption as the \emph{with-prior} or \emph{priorless} setting. The with-prior setting is not unrealistic: Given access to a separate dataset for the same learning task, the adversary can train a shadow model to estimate the conditional probability $\bbP(y|\bx)$. Such an approach is commonly used in membership inference attacks~\citep{shokri2017membership}.

\paragraph{Expected attack utility.} To measure how successful an LIA is at inferring training labels, we define the \emph{attack utility function} $u(\hat{\by}_i, \by_i)$ where $\hat{\by}_i \in \calY$ is the inferred label and $\by_i$ is the ground truth. We assume that $u(\hat{\by}_i, \by_i) \in [0,B]$ for any $\hat{\by}_i, \by_i \in \calY$,
\emph{e.g.}, $u(\hat{\by}_i, \by_i)=\mathds{1}(\hat{\by}_i = \by_i)$ is the zero-one accuracy for classification problems, which is bounded with $B=1$. For regression problems with bounded label range $\calY\subseteq [-b, b]$, we can define $u(\hat{\by}_i, \by_i) = 4b^2 - (\hat{\by}_i - \by_i)^2$, which is bounded with $B=4b^2$. We assume $B=1$ for simplicity; our results can be easily generalized to any $B > 0$.

The \emph{expected attack utility} (EAU) is defined as the expectation of $u(\hat{\by}_i, \by_i)$ over the randomness of the sampling of labels and in the learning algorithm:
\begin{equation}
\label{equ:adv}
    \mathrm{EAU}(\calA, \calK) = \bbE_{\by,\calM}\left[ \left. \frac{1}{n} \sum_{i=1}^n u(\hat{\by}_i, \by_i ) \right| X\right],
\end{equation}
where $\calK$ denotes the adversary's knowledge: $(X, o, \bbP(y|\bx))$ for the with-prior setting and $(X, o)$ for the priorless setting.

\paragraph{Upper bound on expected attack utility.}
For attack utility functions $u$ that reflect the accuracy of a label inference attack, one may ask whether label-DP can provide a uniform upper bound $U(\varepsilon, \delta)$ such that
\begin{equation}
\label{equ:uniform_bound}
    \mathrm{EAU}\left(\calA, \calK\right) \leq U(\varepsilon, \delta)
\end{equation}
holds for any data distribution $\calD$, feature matrix $X$ and attack algorithm $\calA$.
A trivial upper bound $U(\varepsilon, \delta) \leq 1$ follows from the boundedness of $u$.
Unfortunately, we show that this bound is in fact optimal for both the with-prior and the priorless settings.

\begin{theorem}
\label{thm:upper_bound}
There is no function $U(\varepsilon,\delta)$ that satisfies \autoref{equ:uniform_bound} and is strictly less than $1$ at some $\varepsilon, \delta>0$.
\end{theorem}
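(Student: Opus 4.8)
The plan is to refute the existence of any nontrivial uniform bound by exhibiting, for arbitrary $\varepsilon,\delta>0$, a single instance $(\calD, X, \calM, \calA)$ on which the zero-one utility $u(\hat{\by}_i,\by_i)=\mathds{1}(\hat{\by}_i=\by_i)$ forces $\mathrm{EAU}(\calA,\calK)=1$. The conceptual crux, which also motivates the advantage-based reformulation in the sequel, is that the adversary is never obligated to consult the private output $o$: it may simply predict according to the Bayes classifier, whose accuracy is dictated by the Bayes error of $\calD$ and is entirely independent of $\calM$ and hence of $\varepsilon,\delta$. By choosing $\calD$ to have zero Bayes error we make this label-free prediction perfect.

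Concretely, I would take $\calY=\{0,1\}$, fix any feature matrix $X\in\calX^n$, and let $\calD$ be deterministic in the label, i.e. there is a measurable $f^\star:\calX\to\calY$ with $\bbP(y=f^\star(\bx)\mid\bx)=1$ for every $\bx$ (the degenerate choice $f^\star\equiv 0$ already suffices). Then, conditioned on $X$, the label vector $\by$ equals the constant $(f^\star(X_1),\dots,f^\star(X_n))$ almost surely, so the randomness in the definition of $\mathrm{EAU}$ collapses on the label side.

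For the mechanism, any $(\varepsilon,\delta)$-label-DP algorithm is admissible; I would take $\calM$ to output a fixed constant, which is $(\varepsilon,\delta)$-label-DP for every $\varepsilon,\delta\ge 0$. For the attack I would set, in the with-prior setting, $\hat{\by}_i=\argmax_{y}\bbP(y\mid X_i)$ (the Bayes classifier), and in the priorless setting the hard-coded map $\calA(X,o)=(f^\star(X_1),\dots,f^\star(X_n))$ that ignores $o$ entirely. In both cases $\hat{\by}_i=f^\star(X_i)=\by_i$ with probability one, so $u(\hat{\by}_i,\by_i)=1$ for all $i$ and every realization, whence
\begin{equation}
\mathrm{EAU}(\calA,\calK)=\bbE_{\by,\calM}\Big[\frac{1}{n}\sum_{i=1}^n \mathds{1}(\hat{\by}_i=\by_i)\,\Big|\,X\Big]=1.
\end{equation}

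Finally I would close by contradiction: if some $U(\varepsilon,\delta)$ satisfied the uniform bound of \autoref{equ:uniform_bound} and had $U(\varepsilon_0,\delta_0)<1$ at some $\varepsilon_0,\delta_0>0$, then instantiating the construction above with the constant $(\varepsilon_0,\delta_0)$-label-DP mechanism yields $1=\mathrm{EAU}(\calA,\calK)\le U(\varepsilon_0,\delta_0)<1$, a contradiction; since this works for both threat models, the bound $U\equiv 1$ is optimal in each. I do not expect a genuine technical obstacle, as the argument is an existence construction rather than an estimate. The only point demanding care is conceptual: one must observe that the quantifier ``for any attack algorithm $\calA$'' lets the adversary's strategy depend on the (known or prior-given) distribution, so the optimal attack recovers deterministic labels without ever reading $o$. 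This is precisely why privacy of $o$ cannot lower the \emph{absolute} attack accuracy, and why an advantage-based quantity is the right object to bound.
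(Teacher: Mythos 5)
Your proof is correct, and its with-prior half coincides exactly with the paper's: both use the Bayes-classifier attack on a distribution with zero Bayes error, which ignores $o$ and is therefore unaffected by any $(\varepsilon,\delta)$. Where you genuinely diverge is the priorless setting. You exploit the quantifier structure of \autoref{equ:uniform_bound}: since the bound must hold for every pair $(\calD,\calA)$, you pick the attack \emph{after} the distribution and hard-code the labeling function (e.g., the constant all-zero predictor), obtaining $\mathrm{EAU}=1$ exactly, at finite $n$, with no estimates and no concentration argument. The paper instead refuses to let the priorless attack carry any distributional knowledge: it constructs deterministic labels over $\calX=\{-1,1\}$, trains a model on randomized-response-privatized labels (a per-feature majority vote), and shows via Hoeffding's inequality that the simple prediction attack --- which only evaluates the released model $f^n$ and would be well-defined against any distribution --- has EAU tending to $1$ as $n\to\infty$. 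Your route is shorter and technically cleaner; its cost is that your ``priorless'' attack is priorless only in its type signature, since its code encodes $\bbP(y|\bx)$, which collapses the priorless setting into the with-prior one and makes the second half of your argument logically redundant (the with-prior instance alone already refutes any nontrivial $U$). The paper's construction buys a stronger and more informative claim: even an attack with no distributional knowledge baked in, one that genuinely extracts its information from the label-DP-trained model, defeats any nontrivial bound --- which is what ties the impossibility to generalization and memorization (the paper's surrounding narrative about SPA and the results in its Table 1) rather than to a technicality of the quantifiers.
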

\begin{proof}
We first consider the with-prior setting where the adversary has access to the conditional distribution $\bbP(y|\bx)$. For a classification problem with utility function $u(\hat{\by}_i, \by_i)=\mathds{1}(\hat{\by}_i = \by_i)$, we define the attack $\calA_{\rm prior}$ that predicts training labels according to the Bayes classifier:
\begin{equation}
\label{equ:prior_att}
\calA_{\rm prior}(\calK) := \left(\argmax_{y\in\calY}\bbP(\by_i=y|X_i): i=1, \cdots n\right).
\end{equation}
The expected attack utility for $\calA_{\rm prior}$ is:
\begin{equation}
\label{equ:prior_adv}
    \mathrm{EAU}\left(\calA_{\rm prior}, \calK\right) = \frac{1}{n}\sum_{i=1}^n\max_{y\in\calY}\bbP(\by_i=y|X_i).
\end{equation}
In particular, when the label $y$ is deterministic given the feature $\bx$, \emph{i.e.}, $\bbP(y|\bx) = 1$ for some $y \in \calY$, this EAU evaluates to $1$. Note that $\calA_{\rm prior}$ does not depend on $o$ and is thus valid for any $\varepsilon, \delta$, hence $U(\varepsilon, \delta)= 1 ~\forall \varepsilon, \delta>0$.

For the \emph{priorless} setting, consider again a classification problem with the same utility function $u$. Denote by $\calA_{\rm priorless}$ the simple prediction attack, which predicts labels using the label-DP trained model $f$.
We will construct a series of settings ($\calD^n$, $X^n$, $f^n$) such that each $f^n$ is $(\varepsilon, \delta)$-label-DP and as $n\to \infty$, ${\rm \text{EAU}}\left(\calA_{\rm priorless}, \calK\right) \to 1$, which shows that $U(\varepsilon, \delta)\geq 1 ~\forall \varepsilon, \delta>0$.
\begin{itemize}[leftmargin=*,nosep]
    \item \emph{Data construction}: The feature domain $\calX$ is $\{-1, 1\}$ and the label space $\calY$ is $\{-1, 1\}$. We construct $X^n$ with $n = 2r$ samples where $X_1^n= \cdots = X_r^n = 1$ and $X_{r+1}^n = \cdots = X_{2r}^n = -1$. The conditional label distribution given the feature is $\bbP(\by_i^n = X_i^n | X_i^n) = 1$ for all $i$.
    \item \emph{Label-DP algorithm for training $f^n$}: We apply randomized response with $\bbP(\tilde{\by}_i = \by_i^n) = \frac{e^{\varepsilon}}{e^{\varepsilon} + 1}$ to privatize the labels. We then train $f^n$ on the privatized labels to maximize training accuracy, which results in $f^n$ being simply the majority sign function: $f^n(1) = \text{sign}\{ \sum_{i=1}^r \tilde{\by}_i \}$ and $f^n(-1) = \text{sign}\left\{ \sum_{i=r+1}^{2r} \tilde{\by}_i \right\}$.
\end{itemize}
The fact that $f^n$ is $(\varepsilon, 0)$-label-DP follows from $\tilde{\by}_i$ being the randomized response of $\by_i$ for $i=1,\ldots,n$ and $f^n$ being a post-processing function. The SPA attack's EAU is equal to the training accuracy of the model $f^n$, which can be lower bounded using Hoeffding's inequality:
\begin{align*}
    &\hspace{3ex} \bbE\left[ \left. \frac{1}{n} \sum_{i=1}^n \mathds{1}\left[\calA_{\rm priorless}(X^n, f^n)_i = \by_i^n\right] \right| X^n \right] \\
    &= \frac{1}{2} \cdot \bbP\left( \sum_{i=1}^r \tilde{\by}_i >0\right) + \frac{1}{2} \cdot \bbP\left( \sum_{i=r+1}^{2r} \tilde{\by}_i <0\right)\\
    &\geq  \bbP\left( \left|\frac{1}{r}\sum_{i=1}^r \frac{1 + \tilde{\by}_i}{2} - \frac{e^{\varepsilon}}{e^{\varepsilon} + 1}\right| < \frac{e^{\varepsilon}}{e^{\varepsilon} + 1}  - \frac{1}{2}  \right) \\
    &\geq 1 - 2\exp\left(-\left(\frac{e^{\varepsilon}}{e^{\varepsilon} + 1} - \frac{1}{2}\right)^2 n\right).
\end{align*}
Taking $n \rightarrow \infty$ gives the desired result.
\end{proof}

\autoref{thm:upper_bound} shows that for both the with-prior and priorless settings, no non-trivial upper bound for EAU exists for any label-DP privacy parameters $\varepsilon,\delta>0$. It also validates our rationalization about the experimental result in \autoref{tab:empi_eval_ldp_algo} that failure to prevent the SPA attack is to be expected.

\section{LABEL-DP PROVABLY BOUNDS ATTACK ADVANTAGE}
\label{sec:label_dp}
The impossibility results derived in \autoref{sec:setup} suggest that limiting the EAU of label inference attacks may not be a reasonable objective for label privacy. In particular, since the with-prior attack $\calA_\text{prior}$ in \autoref{thm:upper_bound} completely disregards the trained model $o$, it should be treated as a baseline for measuring the effectiveness of LIAs. 
Indeed, $\calA_\text{prior}$ generalizes the Bayes classifier and is optimal among attacks that are independent of the training labels $\by$ by construction. Hence any attack that achieves an EAU equal to or less than that of $\calA_\text{prior}$ \emph{does not gain any additional information} about the training labels from $o$. Thus, we refer to
$$
\mathrm{EAU}(\calA_\text{prior}, \calK) = 
\frac{1}{n}\sum_{i=1}^n\max_{y\in \calY}\bbE_{\by_i} \left[u(y, \by_i)|X_i\right]
$$
as the \emph{label-independent expected attack utility} (L-EAU), and instead measure the success of a label inference attack $\calA$ by defining its \emph{advantage}:
\begin{equation}
    \label{equ:excess_adv}
    \mathrm{Adv}\left(\calA, \calK\right) := \mathrm{EAU}\left(\calA, \calK\right) - \mathrm{EAU}(\calA_\text{prior}, \calK).
\end{equation}

Next, we show that label-DP can effectively reduce the advantage of LIAs to close to $0$ when its privacy parameters $\varepsilon,\delta$ are small. We first prove a distribution-dependent upper bound in
\autoref{thm:excess_adv_upper} that holds for any label-DP output $o$ and attack algorithm $\calA$ but depends on the conditional distribution $\bbP(y | \bx)$, and then give a universal upper bound $\mathrm{Adv}\left(\calA, \calK\right) \leq U(\varepsilon, \delta)$ in Corollary \ref{cor:univ_excess_adv_upper} that only depends on the label-DP parameters $(\varepsilon, \delta)$. Proof is given in \autoref{app:derivations}.

\begin{theorem}
\label{thm:excess_adv_upper}
Assume each label $\by_i$ is sampled independent of $(\by_{-i}, X_{-i})$. If $o$ satisfies $(\varepsilon, \delta)$-label-DP then for any attack algorithm $\calA$, we have:
\begin{align*}
    &\mathrm{Adv}\left(\calA, \calK\right) \\
    &\leq \left(1 - \frac{2}{1 + e^{\varepsilon}} (1 - \delta)\right)\cdot \left(\frac{1}{n}\sum_{i=1}^n\bbE_{\by_i|X_i}\left[\sup_{y\in\calY}u(y, \by_i)\right] \right).
\end{align*}
\end{theorem}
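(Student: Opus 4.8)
The plan is to bound the advantage coordinate by coordinate and reduce everything to a total-variation estimate between two label-DP output laws. By linearity, $\mathrm{Adv}(\calA,\calK)=\frac1n\sum_{i=1}^n(\mathrm{EAU}_i-L_i)$, where $\mathrm{EAU}_i=\bbE_{\by,\calM}[u(\hat\by_i,\by_i)\mid X]$ and $L_i=\max_{y'}\bbE_{\by_i\mid X_i}[u(y',\by_i)]$ is the $i$-th label-independent term, so it suffices to bound each $\mathrm{EAU}_i-L_i$ by $T\cdot\bbE_{\by_i\mid X_i}[M(\by_i)]$ with $M(w):=\sup_{y\in\calY}u(y,w)$ and $T:=1-\frac{2}{1+e^\varepsilon}(1-\delta)$. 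Fix $i$ and write $h(o,w):=\bbE_\calA[u(\calA(\calK)_i,w)]$ for the expected utility when the released output is $o$ and the true label equals $w$; since $u(\cdot,w)\le M(w)$ we have $0\le h(o,w)\le M(w)$. Let $p^{(i)}_w$ be the law of $o=\calM(X,\by)$ conditioned on $\by_i=w$. First I would show these conditional laws are pairwise $(\varepsilon,\delta)$-indistinguishable: for every fixed $\by_{-i}=z$ the datasets with $\by_i=w$ and $\by_i=w'$ differ in a single label, so label-DP gives the inequality pointwise in $z$, and because the law of $\by_{-i}$ does not depend on $\by_i$ I integrate over $z$ to obtain $p^{(i)}_w(S)\le e^\varepsilon p^{(i)}_{w'}(S)+\delta$ for all $S$.

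The second step is an algebraic rearrangement isolating the label-dependent part. Conditioning on the true label and using independence, $\mathrm{EAU}_i=\bbE_w[\int h(o,w)\,dp^{(i)}_w(o)]$ with $w\sim\bbP(\by_i=\cdot\mid X_i)$. Introduce the mixture $\bar p_i:=\bbE_w\,p^{(i)}_w$ and split
\[
\mathrm{EAU}_i=\int \bbE_w[h(o,w)]\,d\bar p_i(o)\;+\;\bbE_w\!\Big[\int h(o,w)\,\big(dp^{(i)}_w-d\bar p_i\big)(o)\Big].
\]
The first term is at most $L_i$: for every fixed $o$, $\bbE_w[h(o,w)]=\bbE_\calA\,\bbE_w[u(\calA(\calK)_i,w)]\le\max_{y'}\bbE_w[u(y',w)]=L_i$, and $\bar p_i$ is a probability measure. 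Hence $\mathrm{EAU}_i-L_i$ is at most the residual term, which I symmetrize with a second independent copy $w'\sim\bbP(\by_i=\cdot\mid X_i)$: since $dp^{(i)}_w-d\bar p_i=\bbE_{w'}[dp^{(i)}_w-dp^{(i)}_{w'}]$, the residual equals
\[
\tfrac12\,\bbE_{w,w'}\Big[\int \big(h(o,w)-h(o,w')\big)\,d\big(p^{(i)}_w-p^{(i)}_{w'}\big)(o)\Big].
\]

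The third step is the total-variation estimate supplying the factor $\frac{2}{1+e^\varepsilon}$. As $p^{(i)}_w-p^{(i)}_{w'}$ is a signed measure of total mass zero, I recenter the integrand and bound $\int(h(o,w)-h(o,w'))\,d(p^{(i)}_w-p^{(i)}_{w'})\le(M(w)+M(w'))\,\|p^{(i)}_w-p^{(i)}_{w'}\|_{\mathrm{TV}}$, using that $h(\cdot,w)-h(\cdot,w')$ ranges in an interval of length $M(w)+M(w')$. The key lemma is that $(\varepsilon,\delta)$-indistinguishability forces $\|p^{(i)}_w-p^{(i)}_{w'}\|_{\mathrm{TV}}\le\frac{e^\varepsilon-1+2\delta}{e^\varepsilon+1}=T$; this is a two-point linear program maximizing $a-b$ subject to $a\le e^\varepsilon b+\delta$ and $1-b\le e^\varepsilon(1-a)+\delta$, whose optimum is attained at $b=\frac{1-\delta}{e^\varepsilon+1},\,a=\frac{e^\varepsilon+\delta}{e^\varepsilon+1}$. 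Plugging this in and collapsing the double expectation, $\tfrac12\bbE_{w,w'}[(M(w)+M(w'))\,T]=T\,\bbE_w[M(w)]=T\,\bbE_{\by_i\mid X_i}[M(\by_i)]$, exactly the claimed per-coordinate bound; averaging over $i$ finishes the proof.

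I expect the main obstacle to be obtaining the sharp constant rather than a crude one. The naive route---applying the one-sided bound $\int f\,dp^{(i)}_w\le e^\varepsilon\int f\,d\bar p_i+\delta M(w)$ directly---only yields a factor $e^\varepsilon-1+\delta$, roughly twice too large at small $\varepsilon$. Recovering $\frac{e^\varepsilon-1+2\delta}{e^\varepsilon+1}=1-\frac{2}{1+e^\varepsilon}(1-\delta)$ requires (i) exploiting \emph{both} directions of the DP inequality through the total-variation distance, and (ii) the symmetrization that replaces the comparison of $p^{(i)}_w$ against the mixture by antisymmetric pairwise differences, so the mean-zero centering of $h(o,w)-h(o,w')$ against the signed measure can be used. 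A secondary point to handle carefully is the reduction establishing that the conditional output laws $p^{(i)}_w$ are themselves $(\varepsilon,\delta)$-DP, which is precisely where the assumption that $\by_i$ is independent of $(\by_{-i},X_{-i})$ enters.
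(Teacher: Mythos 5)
Your proof is correct, and it reaches the theorem by a genuinely different decomposition than the paper's, although both rest on the same core fact. The paper's proof compares the real run against a single hypothetical run in which $\by_i$ is replaced by a \emph{fixed} label $y^*$: it writes $\bbE[u(\hat{\by}_i,\by_i)\mid \by,X]$ via the layer-cake formula, applies the bound $\bbP(\calM(D)\in S)-\bbP(\calM(D')\in S)\le 1-\frac{2}{1+e^{\varepsilon}}(1-\delta)$ event-by-event (citing Remark A.1 of \citet{kairouz2015composition} rather than deriving it), and then bounds the comparison term by the L-EAU term using the fact that $\hat{\by}_i'$ is independent of $\by_i$ given $(X,\by_{-i})$. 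You instead compare each conditional output law $p^{(i)}_w$ against the label-mixture $\bar p_i$, which is what forces your symmetrization step and the recentered total-variation estimate; the paper's fixed-label comparison needs neither, because a one-sided difference of probabilities of the same event under two $(\varepsilon,\delta)$-indistinguishable laws already yields the sharp constant. What your route buys is self-containedness: your two-point linear program is precisely the content of the Kairouz remark the paper invokes as a black box, and your closing observation---that the naive one-sided comparison against the mixture loses roughly a factor of two---correctly identifies why some extra device (your symmetrization, or the paper's fixed-label alternative dataset) is needed to recover $1-\frac{2}{1+e^{\varepsilon}}(1-\delta)$. Both arguments use the independence assumption in the same place and for the same purpose: ensuring that the distribution over the rest of the labels does not depend on $\by_i$, so that single-coordinate label-DP adjacency can be integrated over $\by_{-i}$.
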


\begin{proof}
First note that the adversary's inferred label vector $\hat{\by} = \calA(X, \calM(X, \by))$ is a random variable that depends on both the sampling of training labels $\by$ and randomness in the learning algorithm $\calM$. Then:
\begin{align}
\label{equ:total_expectation}
    \bbE [u(\hat{\by}_i, \by_i) | X, \by_{-i}] &= \bbE_{\by_i|X_i}\left[\bbE [u(\hat{\by}_i, \by_i) |\by, X]\right],
\end{align}
where the equality holds by the assumption that $\by_i$ is independent of $X_{-i}$ and $\by_{-i}$. For each $i=1, \cdots, n$, let $B(\by_i) = \sup_{y\in\calY}u(y, \by_i)$ be the maximal attack utility attainable when inferring the ground truth label $\by_i$. Consider an alternative label vector $\by'$ that is identical to $\by$ except for $\by'_i$ being replaced with some deterministic label value $y^*$, and denote by $\hat{\by}' = \calA(X, \calM(X, \by'))$ the adversary's inferred labels for the model trained on $(X, \by')$.
By label-DP, we have:
    \begin{align*}
        \bbE [u(\hat{\by}_i, \by_i) |\by, X]
        &=\int_0^{B(\by_i)} \bbP(u(\hat{\by}_i, \by_i) > v |\by, X) \diff v\\
        &\leq\int_0^{B(\by_i)} \bbP(u(\hat{\by}_i', \by_i) > v |\by, X) \diff v \\
         &+\int_0^{B(\by_i)} \left(1 - \frac{2}{1 + e^{\varepsilon}} (1 - \delta)\right)\diff v\\
        &= \bbE [u\left(\hat{\by}_i', \by_i\right) |\by, X] \\
        &+  \left(1 - \frac{2}{1 + e^{\varepsilon}} (1 - \delta)\right)\cdot B(\by_i),
    \end{align*}
where the inequality follows the Remark A.1 in \cite{kairouz2015composition} Substituting the above inequality into \autoref{equ:total_expectation} gives:
\resizebox{\linewidth}{!}{
  \begin{minipage}{\linewidth}
\begin{align*}
\bbE [u(\hat{\by}_i, \by_i) | X, \by_{-i}]
&\leq \bbE\left[u\left(\hat{\by}_i', \by_i\right)|X, \by_{-i}\right] \\
&+  \left(1 - \frac{2}{1 + e^{\varepsilon}} (1 - \delta)\right)\cdot \bbE_{\by_i|X_i}\left[B(\by_i)\right]\\
&\leq \max_{y\in\calY} \bbE_{\by_i|X_i}[u(y, \by_i)] \\
&+  \left(1 - \frac{2}{1 + e^{\varepsilon}} (1 - \delta)\right)\cdot \bbE_{\by_i|X_i}\left[B(\by_i)\right],
\end{align*}
  \end{minipage}
}
where the last inequality holds by the fact that $\hat{\by}_i'$ is independent of $\by_i$ conditioned on $X$ and $\by_{-i}$.
Finally, we can derive our bound for the advantage $\mathrm{Adv}\left(\calA, \calK\right)$:\\
\resizebox{\linewidth}{!}{
  \begin{minipage}{\linewidth}
\begin{align*}
 &\mathrm{EAU}\left(\calA, \calK\right) 
    - \frac{1}{n}\sum_{i=1}^n\max_{y\in \calY}\bbE_{\by_i} \left[u(y, \by_i)|X_i\right]\\
 &=\frac{1}{n}\sum_{i=1}^n\left(\bbE_{\by_{-i}}\bbE [u(\hat{\by}_i, \by_i) | X, \by_{-i}] - \max_{y\in \calY}\bbE_{\by_i} \left[u(y, \by_i)|X_i\right]\right)\\
 &\leq   \left(1 - \frac{2}{1 + e^{\varepsilon}} (1 - \delta) \right)\cdot \left(\frac{1}{n}\sum_{i=1}^n\bbE_{\by_i|X_i}\left[\sup_{y\in\calY}u(y, \by_i)\right] \right).
\end{align*}
  \end{minipage}
}
\end{proof}

\begin{corollary}
\label{cor:univ_excess_adv_upper}
Suppose $u(y', y) \in [0, B]$ for any $y', y\in\calY$ and each label $\by_i$ is sampled independent of $(\by_{-i}, X_{-i})$. If $o$ satisfies $(\varepsilon, \delta)$-label-DP then for any data distribution $\calD$, any feature matrix $X$ and any attack algorithm $\calA$, we have:
\begin{align*}
    &\mathrm{Adv}\left(\calA, \calK\right) \leq \left(1 - \frac{2}{1 + e^{\varepsilon}} (1 - \delta)\right)\cdot B.
\end{align*}
\end{corollary}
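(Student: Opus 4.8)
The plan is to deduce the corollary directly from \autoref{thm:excess_adv_upper} by replacing the distribution-dependent second factor with the uniform constant $B$. The hypotheses here---each $\by_i$ sampled independently of $(\by_{-i}, X_{-i})$, and $o$ being $(\varepsilon,\delta)$-label-DP---are exactly those of the theorem, so I would first invoke it to obtain
\[
\mathrm{Adv}(\calA, \calK) \leq \left(1 - \frac{2}{1 + e^{\varepsilon}}(1-\delta)\right)\cdot\left(\frac{1}{n}\sum_{i=1}^n\bbE_{\by_i|X_i}\left[\sup_{y\in\calY}u(y,\by_i)\right]\right).
\]

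Next I would bound the bracketed factor. Since the corollary assumes $u(y', y)\in[0,B]$ for all $y', y\in\calY$, the pointwise inequality $\sup_{y\in\calY}u(y,\by_i)\leq B$ holds for every realization of $\by_i$. Taking the conditional expectation preserves this bound, giving $\bbE_{\by_i|X_i}\left[\sup_{y\in\calY}u(y,\by_i)\right]\leq B$ for each $i$, and averaging over $i=1,\dots,n$ yields $\frac{1}{n}\sum_{i=1}^n\bbE_{\by_i|X_i}\left[\sup_{y\in\calY}u(y,\by_i)\right]\leq B$.

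The only point that must be verified before substituting is the sign of the prefactor: replacing the second factor by the larger quantity $B$ preserves an \emph{upper} bound only if $1 - \frac{2}{1 + e^{\varepsilon}}(1-\delta)\geq 0$. This holds because for $\varepsilon\geq 0$ we have $e^{\varepsilon}\geq 1$, hence $\frac{2}{1 + e^{\varepsilon}}\leq 1$, and $1-\delta\leq 1$ in the nontrivial regime $\delta\leq 1$, so the product is at most $1$. With the prefactor nonnegative, I combine the two observations to conclude $\mathrm{Adv}(\calA, \calK)\leq \left(1 - \frac{2}{1 + e^{\varepsilon}}(1-\delta)\right)\cdot B$, which is exactly the claimed bound; since each step is a uniform bound independent of $\calD$, $X$, and $\calA$, the inequality holds simultaneously for all of them. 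The main ``obstacle'' is therefore minor and is precisely this nonnegativity check---everything else is immediate from the theorem.
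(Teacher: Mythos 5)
Your proof is correct and follows the same (essentially forced) route the paper takes: the corollary is an immediate consequence of \autoref{thm:excess_adv_upper}, obtained by bounding each term $\bbE_{\by_i|X_i}\left[\sup_{y\in\calY}u(y,\by_i)\right]$ by $B$ via the assumption $u(y',y)\in[0,B]$. Your explicit check that the prefactor $1 - \frac{2}{1+e^{\varepsilon}}(1-\delta)$ is nonnegative is a detail the paper leaves implicit, and it is the right thing to verify before substituting the larger factor.
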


\paragraph{Interpretation of \autoref{thm:excess_adv_upper}.} 
We can interpret \autoref{thm:excess_adv_upper} by revisiting the example in \autoref{sec:paradox_label_dp}.
Instead of bounding the EAU of a label inference attack, \autoref{thm:excess_adv_upper} shows that label-DP with low $(\varepsilon,\delta)$ can upper bound the \emph{advantage} to close to $0$. 
This result explains why even with the strong guarantee of $\epsilon$-label-DP at $\varepsilon=0.1$ in \autoref{tab:empi_eval_ldp_algo}, the attack utility could still be as high as $80\%+$:
Because both MNIST and CIFAR10 admit a high L-EAU (\emph{i.e.}, high accuracy of the Bayes classifier), LIAs that attain $80\%+$ EAU may not even outperform the label-independent attack $\calA_\text{prior}$, hence models trained by label-DP algorithms do not leak a significant amount of information about training labels.

Moreover, we observe that the bound for the advantage, or equivalently the EAU, is relative to both the label-DP parameter $\varepsilon$ and the underlying distribution. 
When $\frac{1}{n}\sum_{i=1}^n\bbE_{\by_i|X_i}\left[\sup_{y\in\calY}u(y, \by_i)\right]$ is higher, a higher $\varepsilon$ is sufficient to achieve the same level of protection against LIAs.
This interpretation is in-line with the design principle of DP, which is meant to limit the difference between the prior and posterior distributions for the underlying data~\citep{dwork2014algorithmic, kasiviswanathan2014semantics}.
From a practical aspect, one can use Corollary \ref{cor:univ_excess_adv_upper} to calibrate the values of $\varepsilon$ and $\delta$ for the dataset at hand to limit the utility of arbitrary label inference attack.

\subsection{Label-DP vs. DP}

Remarkably, using DP even when only the label is private can give stronger semantic guarantees against LIAs than the label-DP guarantee in \autoref{thm:excess_adv_upper}. This is however not true under the threat model defined in \autoref{sec:impossibility_ldp} where the feature matrix $X$ is public, but holds under a weaker threat model where the feature matrix is non-private but \emph{unknown}. This threat model has been considered in \citet{ghazi2021deep} and was implicitly used to motivate the randomized response mechanism~\citep{warner1965randomized}.

In essence, with $X$ unknown, the with-prior attack $\calA_\text{prior}$ is no longer viable. Instead, the optimal attack without observing the trained model is to guess according to the \emph{marginal distribution} of $\by_i$ for each $i$, resulting in an EAU of $\max_{y\in\calY}\bbE_{\by_i}[u(y, \by_i)]$, which is provably smaller than the L-EAU when the feature matrix is known and we denote it as L-EAU$^w$.
For example, when we consider the data from CIFAR10, the L-EAU is able to attain $80\%+$ when feature matrix is known, while L-EAU$^w$ is only $10\%$ in the weaker threat model without the knowledge of the feature matrix.
We define the new advantage corresponding to the weaker threat model as 
\begin{equation}
    \label{equ:excess_adv_featknown}
    \mathrm{Adv}^{w}\left(\calA, \calK^{w}\right) := \mathrm{EAU}\left(\calA, \calK^{w}\right) - \text{L-EAU}^w,
\end{equation}
where $\calK^w=o$ denotes the adversary's knowledge in this weaker setting.

Due to the lack of protection for the feature matrix, label-DP is not capable limit this advantage to $0$.
This is intuitive: with a successful feature (data) reconstruction attack~\citep{fredrikson2015model, carlini2019secret, zhu2019deep}, the adversary will have the knowledge of feature and hence achieve previous higher L-EAU.
Instead, DP including the protection of features can successfully limit this advantage of the weaker threat model into 0.
\autoref{thm:excess_adv_upper_dp} below gives a precise statement; see \autoref{app:derivations} for proof.
\begin{theorem}
\label{thm:excess_adv_upper_dp}
Assume each data $(X_i, \by_i)$ is sampled independently. If $f$ satisfies $(\varepsilon, \delta)$-DP then for any attack algorithm $\calA$, we have:
\begin{align*}
    &\mathrm{Adv}^{w}\left(\calA, \calK^{w}\right) \\
    &\leq \left(1 - \frac{2}{1 + e^{\varepsilon}} (1 - \delta)\right)\cdot \left(\frac{1}{n}\sum_{i=1}^n\bbE_{\by_i}\left[\sup_{y\in\calY}u(y, \by_i)\right] \right).
\end{align*}
\end{theorem}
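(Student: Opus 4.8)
The plan is to mirror the proof of \autoref{thm:excess_adv_upper}, replacing its ``swap a single label'' argument with a ``swap a single full data point'' argument suited to full DP, and then tracking how this change turns the baseline term from the conditional expectation $\bbE_{\by_i\mid X_i}$ into the marginal expectation $\bbE_{\by_i}$. Since the adversary now observes only $o=f(X,\by)$, I would write $\hat{\by}=\calA(f(X,\by))$ and expand $\mathrm{EAU}(\calA,\calK^{w})$ as the average over $i$ of $\bbE[u(\hat{\by}_i,\by_i)]$, where the expectation is now taken over the sampling of the \emph{entire} dataset $(X,\by)$ as well as the internal randomness of $f$ and $\calA$ --- the features $X$ are no longer held fixed, which is exactly what makes the relevant label law marginal rather than conditional. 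I would condition on the remaining data points $(X_{-i},\by_{-i})$ and average over them only at the end.

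Next, fix an arbitrary reference point $(\bx^*,y^*)$ and let $(X',\by')$ be $(X,\by)$ with its $i$-th entry replaced by $(\bx^*,y^*)$; set $o'=f(X',\by')$ and $\hat{\by}'=\calA(o')$. The datasets $(X,\by)$ and $(X',\by')$ are DP-adjacent (they differ in one full data point), so the $(\varepsilon,\delta)$-DP guarantee controls the laws of $o$ and $o'$, and hence by post-processing the laws of $\hat{\by}_i$ and $\hat{\by}_i'$. Writing $B(\by_i)=\sup_{y\in\calY}u(y,\by_i)$, I would combine the layer-cake identity $\bbE[u]=\int_0^{B(\by_i)}\bbP(u>v)\diff v$ with the same tail bound from Remark A.1 of \citet{kairouz2015composition} to obtain, conditionally on a realization of $(X,\by)$,
\[
\bbE[u(\hat{\by}_i,\by_i)\mid X,\by]\le \bbE[u(\hat{\by}_i',\by_i)\mid X,\by]+\Bigl(1-\tfrac{2}{1+e^{\varepsilon}}(1-\delta)\Bigr)B(\by_i).
\]

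The step where the DP version genuinely diverges from the label-DP version is the treatment of the leading term. Because $\hat{\by}_i'$ is computed on a dataset whose $i$-th point is the \emph{fixed} $(\bx^*,y^*)$, it carries no information about the true point $(X_i,\by_i)$; under the assumption that data points are sampled independently, $\by_i$ is independent of $(X_{-i},\by_{-i})$ and therefore of $\hat{\by}_i'$. Crucially, since $X_i$ was also overwritten, the relevant law of $\by_i$ is its \emph{marginal} $\bbP(\by_i=y)$ rather than the conditional $\bbP(\by_i=y\mid X_i)$. Hence for every realization of $\hat{\by}_i'$ I get $\bbE_{\by_i}[u(\hat{\by}_i',\by_i)]\le \max_{y\in\calY}\bbE_{\by_i}[u(y,\by_i)]$, which averaged over $i$ is exactly $\text{L-EAU}^{w}$. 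Meanwhile $B(\by_i)$ depends only on $\by_i$, so averaging it over $(X_i,\by_i)$ yields $\bbE_{\by_i}[\sup_{y}u(y,\by_i)]$ with $\by_i$ marginal.

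Finally, I would take expectation over $(X_i,\by_i)$ given $(X_{-i},\by_{-i})$, then over $(X_{-i},\by_{-i})$, average over $i$, and subtract $\text{L-EAU}^{w}$: the leading term cancels against the baseline and the claimed bound remains. I expect the main obstacle to be making this third step fully rigorous --- arguing simultaneously that overwriting the \emph{entire} $i$-th point (not merely its label) is what renders $\hat{\by}_i'$ independent of $\by_i$ and what forces the baseline onto the marginal label law, which is precisely the reason full DP, unlike label-DP, can drive $\mathrm{Adv}^{w}$ toward $0$. The DP tail inequality and the layer-cake decomposition are routine once \autoref{thm:excess_adv_upper} is in hand.
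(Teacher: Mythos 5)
Your proposal is correct and follows essentially the same route as the paper's proof: condition on $(X_{-i},\by_{-i})$, swap the entire $i$-th data point for a fixed reference, apply the layer-cake decomposition with the tail bound from Remark A.1 of \citet{kairouz2015composition}, and use the resulting independence of $\hat{\by}_i'$ from $\by_i$ to reduce the leading term to the marginal-law baseline $\max_{y\in\calY}\bbE_{\by_i}[u(y,\by_i)]$. The step you flagged as the main obstacle --- that overwriting the full point (not just the label) is what forces the baseline onto the marginal rather than conditional distribution --- is exactly how the paper's argument works, and you have articulated it more explicitly than the paper does.
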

Hence, in the scenario where the adversary does not have access to the feature matrix $X$, DP gives a stronger guarantee against label inference attacks and can be preferred over label-DP especially when L-EAU is relatively large.
\section{EXPERIMENTS}
\begin{figure*}[t]%
\includegraphics[width=\linewidth]{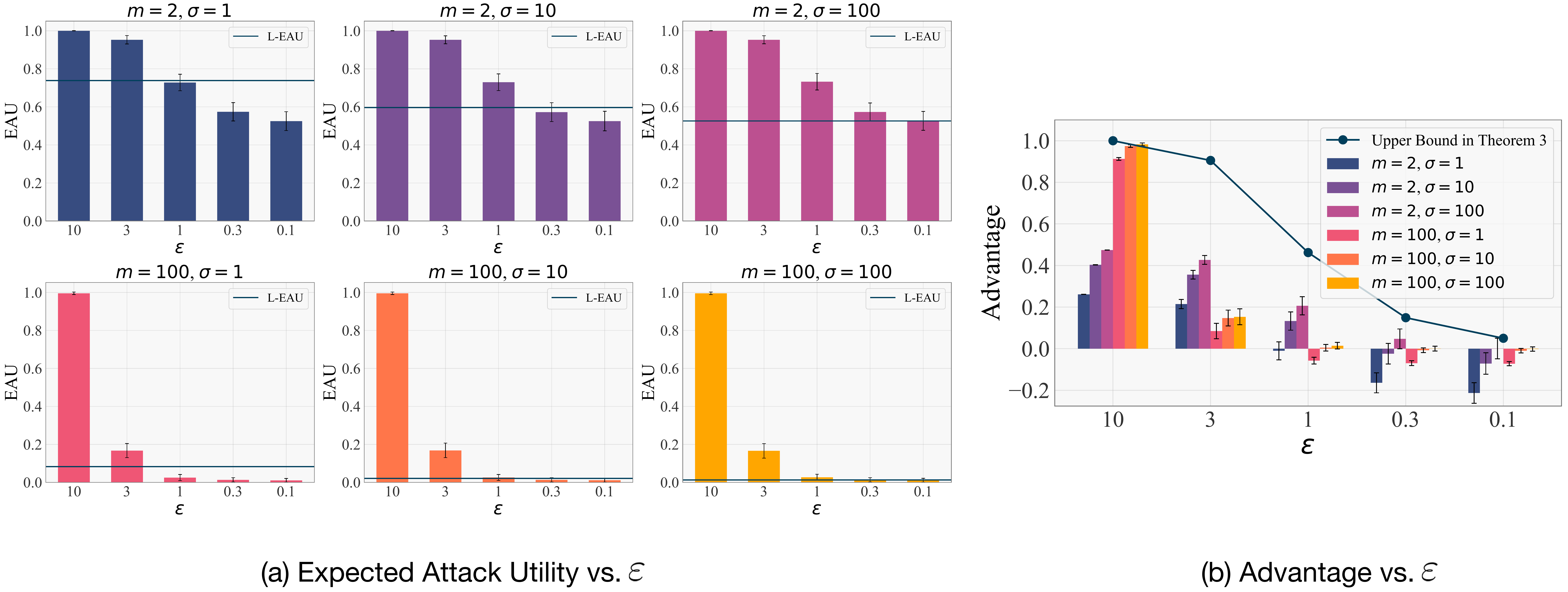}
\caption{
\textbf{(a)} EAU and L-EAU vs. $\varepsilon$ on the simulated dataset with $m=2,100$ and $\sigma=1, 10, 100$. EAU decreases with lower $\varepsilon$ (stronger privacy guarantee) and can be lower than L-EAU for moderately low values of $\varepsilon$. \textbf{(b)} Attack advantage and theoretical upper bound vs. $\varepsilon$. The upper bound dominates the advantage while being fairly tight for the end values $\varepsilon=0.1$ and $\varepsilon=10$ for $m=100$. The error bar represents standard deviation across $T=1000$ different draws of label vector $\by$.}
\label{fig:simulation}
\end{figure*}
We validate \autoref{thm:excess_adv_upper} on both a simulated Gaussian mixture dataset (\autoref{sec:simulated_exp}) and a real world ads click prediction dataset (\autoref{sec:real_exp}) and show that empirical results obey the theoretically derived upper bound.
For full reproducibility, we release our code at \url{https://github.com/jinpz/label_differential_privacy/}.

\subsection{Simulated Dataset with Gaussian Mixture}
\label{sec:simulated_exp}

\paragraph{Data generation.}
We define a classification setting where the feature space $\calX = \mathbb{R}^d$ with $d=100$ and the label space $\calY = \{1,\ldots,m\}$ for $m=2$ or $m=100$ classes. For each class $i$, features are sampled from an isotropic Gaussian $\calN(\be_i, \sigma^2 I_d)$ where $\be_i$ is the standard basis vector. We vary $\sigma \in \{1,10,100\}$ and the resulting data distribution $\calD$ is the uniform mixture of the $m$ classes' distributions.

\paragraph{Model training.} To train a private model that satisfies label-DP, we first draw $n=100$ random samples from the mixture distribution $\calD$. Given a target label-DP privacy parameter $\epsilon$, the learning algorithm trains a logistic regressor by generating privatized labels using randomized response~\citep{warner1965randomized} to satisfy $\epsilon$-label-DP. This process is repeated multiple times to estimate the EAU of the simple prediction attack; see the following paragraph for details.

\paragraph{Attack evaluation.} We evaluate the simple prediction attack (SPA) using the utility function $u(\hat{y}, y)=\mathds{1}\{\hat{y}=y\}$. To estimate the expected attack utility (EAU), we first \emph{fix} a random draw of training features $X$ from the marginal distribution of $\calD$. Given $X$ and the Gaussian mixture parameters, we can compute the conditional probability of each $y \in \calY$ using mixture densities and sample the label vector $\by$ accordingly. This process is repeated $T=1000$ times for the same training features to estimate the expectation over $\by$ in \autoref{equ:adv}. We find that the sampling of $X$ does not significantly impact our result. Finally, we can construct the Bayes classifier by choosing the label $y$ for each $X_i$ that maximizes the conditional probability $\bbP(y|X_i)$ and compute L-EAU directly.

\paragraph{Results.} 
In \autoref{fig:simulation}(a), we plot EAU and L-EAU against $\varepsilon$ for the simulated dataset with $m = 2,100$ and $\sigma=1, 10, 100$. For both settings, EAU is close to 100\% when $\varepsilon=10$, and decreasing $\varepsilon$ (\emph{i.e.}, stronger privacy guarantee) leads to smaller EAU values. For $m=2$, although EAU is consistently high even at $\epsilon=0.1$, L-EAU is also very high, hence the attack does not attain a very large advantage (difference between EAU and L-EAU). For $m=100$, L-EAU is much lower due to the classification problem being harder, while EAU also drops very quickly as $\varepsilon$ decreases. 

In \autoref{fig:simulation}(b), we plot advantage and the upper bound in \autoref{thm:excess_adv_upper} against $\varepsilon$. As expected, the upper bound dominates attack advantage in all settings, and both values decrease monotonically as $\varepsilon$ decreases.
The upper bound is tight at the two end values $\varepsilon=0.1$ and $\varepsilon=10$ for $m=100$, whereas for $m=2$ it can be fairly loose.
This is because L-EAU for $m=2$ is at least $0.5$ and therefore advantage is always upper bounded by $0.5$. Our upper bound could potentially be improved if the minimum L-EAU value can be inferred from the task and/or data distribution.

\begin{table*}[t]
    \centering
    \begin{tabular}{c|cccccc}
        \toprule
         \textbf{Label-DP Algorithm} & $\varepsilon=\infty$ & $\varepsilon=8$ & $\varepsilon=4$ & $\varepsilon=2$ & $\varepsilon=1$ & $\varepsilon=0.1$\\
        \midrule 
        LP-1ST  & 0.123 &  0.123 &	0.129 &	0.204 &	0.360 &	0.651 \\
        LP-1ST ({\small domain prior})  & 0.123 & 0.123 & 0.128 & - & - & - \\
        LP-1ST ({\small noise correction}) & 0.123 & 0.123 &	0.126 &	0.151 &	0.177 &	0.646 \\
        LP-2ST  & 0.123 & 0.123	& 0.129 &	0.202 &	0.346 &	0.530 \\
        PATE  & 0.130 & 0.151 & 0.164 & 0.194 & 0.248 & 0.676 \\
        \bottomrule
    \end{tabular}
    \caption{Log loss of state-of-the-art label-DP algorithms under different $\varepsilon$. When $\varepsilon\leq 2$, none of the label-DP algorithms outperform the constant prediction baseline, which attains a log loss of $0.135$. For LP-1ST ({\small domain prior}) at $\varepsilon\in\{2, 1, 0.1\}$, the mechanism heavily relied on the prior and returned label $0$ with probability $1$ for all training samples, so training did not yield any meaningful result.}
    \label{tab:real_world_results}
\end{table*}

\subsection{Criteo Ads Click-Through Rate (CTR) Prediction}
\label{sec:real_exp}

To understand the behavior of LIAs and our advantage bound on real world datasets that closely resemble the usage scenarios of label-DP, we conduct experiments on the Criteo 1TB Click Logs dataset\footnote{\href{https://ailab.criteo.com/download-criteo-1tb-click-logs-dataset}{https://ailab.criteo.com/download-criteo-1tb-click-logs-dataset}}~\citep{tallis2018reacting} for click-through rate (CTR) prediction. 

\begin{figure*}
    \centering
    \includegraphics[width=\linewidth]{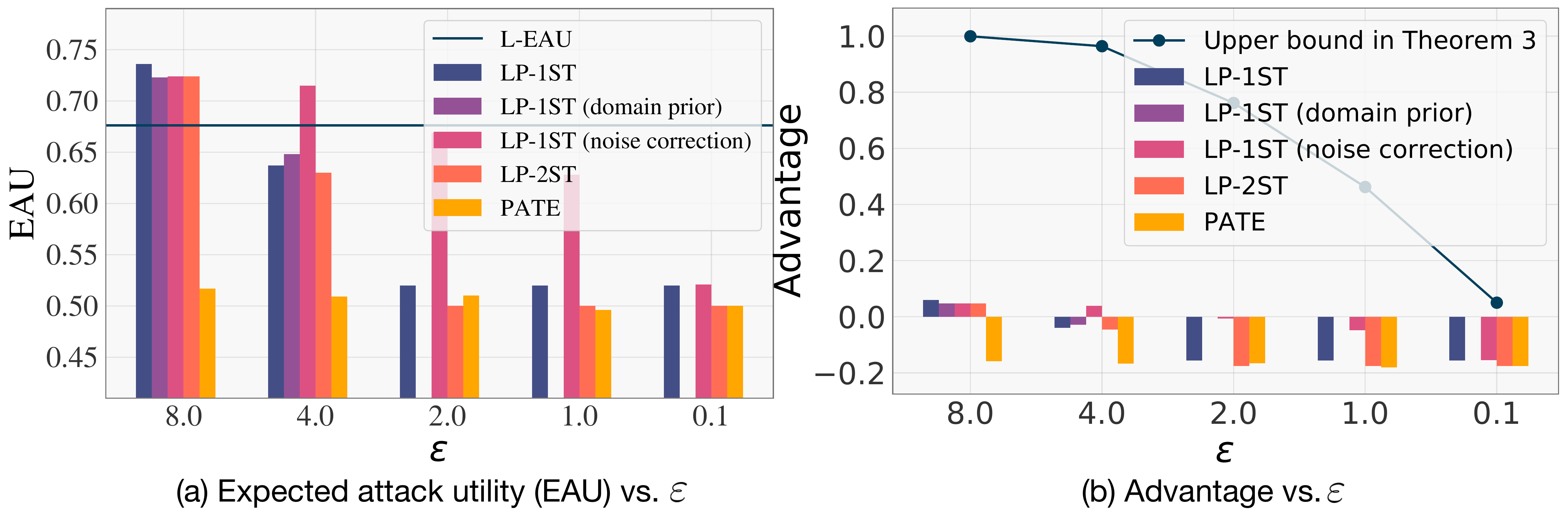}
    \caption{Expected attack utility (EAU) and advantage of the simple prediction attack against label-DP models trained on the Criteo dataset. L-EAU of $0.676$ is estimated using maximum test accuracy achieved on this dataset among all trained models. Even at $\epsilon=8$, the EAU for label-DP models is marginally above L-EAU, hence the corresponding advantage is only slightly above $0$. In (b), we show that \autoref{thm:excess_adv_upper} strictly upper bounds attack advantage but there exists a large gap compared to the computed advantage.}
    \label{fig:criteo_attack}
\end{figure*}

\paragraph{Dataset description.}
In CTR prediction, the task is to predict the percentage of users that will click on the ad given ad features.
The features consist of 13 integer values and 26 categorical features, while the semantic of these features is undisclosed. The binary label indicates whether a user clicked on the ad or not. The marginal label distribution is heavily skewed with approximately $97\%$ of samples having the label $0$, \emph{i.e.}, no click.

The dataset contains more than 4B click log data points spanning across 24 days of data collection.
We subsample 1M data points from the first day's entries, and take $80\%$ as the training set, $4\%$ as the validation set and the remaining $16\%$ as the test set.
Following the Kaggle competition\footnote{\href{https://www.kaggle.com/c/criteo-display-ad-challenge}{https://www.kaggle.com/c/criteo-display-ad-challenge}} for this dataset,
we evaluate model utility using log loss:
$$
\frac{1}{|D_\text{test}|}\sum_{(\bx, y)\in D_\text{test}}-y\cdot \log\left(f(\bx)\right) + (y-1) \cdot \log\left(1-f(\bx)\right).$$

\paragraph{Model training.}
We implemented gradient-boosted decision tree~\citep{friedman2001greedy} in CatBoost~\citep{dorogush2018catboost} as the baseline non-private learning algorithm. We further adapted LP-MST~\citep{ghazi2021deep} and PATE~\citep{papernot2016semi} to this setting as label-DP learning algorithms; see \autoref{app:exp} for implementation details. For LP-MST, we considered multiple variants:
LP-1ST, LP-1ST ({\small domain prior}), LP-1ST ({\small noise correction}) and LP-2ST.

\paragraph{Attack evaluation.}
Since the dataset is heavily skewed towards the label $0$, simply predicting the all-zero label vector $\hat{\by}$ can attain an approximately $97\%$ attack accuracy under the zero-one accuracy utility function. To correct for this bias, we consider a \emph{weighted} attack utility function:
\begin{equation}
    \label{equ:reweighting}
    u(\hat{y}, y):=\frac{1}{2p_y}\mathds{1}\{\hat{y} = y\},
\end{equation}
where $p_y$ is the marginal probability of label $y$. This re-weighting has a meaningful interpretation for the adversary as well: The label $1$ represents a user click and is more valuable to infer compared to the label $0$ that represents no click, and hence should be given a higher utility when predicted correctly. Under this attack utility, predicting all-$0$, all-$1$ or randomly guessing $y$ with probability $p_y$ all attain an EAU of $1/2$. Finally, we adapt the SPA attack accordingly to optimize re-weighted utility:
 \resizebox{\linewidth}{!}{
   \begin{minipage}{\linewidth}
    \begin{align*}
        \calA_{\rm priorless}(X, f)_i
        = \argmax_{y\in\{0, 1\}}\frac{1}{p_y}\cdot (y\cdot f(X_i) + (1-y)\cdot (1-f(X_i)).
    \end{align*}
   \end{minipage}
 }

\paragraph{Computing EAU.} Since the conditional label distribution is unknown, we cannot compute EAU or L-EAU directly as in the simulated dataset experiment in \autoref{sec:simulated_exp}. Instead, we use a model's training accuracy (weighted according to \autoref{equ:reweighting}) as an unbiased estimator for the EAU of the SPA attack. For L-EAU, any ML model evaluated on the test set gives a valid lower bound, and we pick the maximum over all models trained on this dataset as an approximation for L-EAU.

\paragraph{Results.}
\autoref{tab:real_world_results} shows the model utility of label-DP models trained with $\varepsilon\in\{\infty, 8, 4, 2, 1, 0.1\}$.
Since the dataset is heavily skewed with $p_0 \approx 0.97$ fraction of samples belonging to class $0$, the constant predictor $f(\bx) = p_0$ for all $\bx$ achieves a log loss of $0.135$.
In comparison, the label-DP algorithms fail to outperform this naive baseline when $\varepsilon\leq 2$. Our evaluation suggests that there is much room for improvement in existing label-DP learning algorithms for this highly noisy and heavily skewed learning setting.

Next, we plot EAU of the SPA attack along with our estimate of L-EAU in \autoref{fig:criteo_attack}(a). 
The maximum attainable EAU is $1.0$, while both EAU of the SPA attack and the estimated L-EAU
are not very high, which reflects the noisy nature of this dataset. Moreover, EAU quickly deteriorates to below L-EAU when $\varepsilon=2$, despite the attack accurately inferring approximately $97\%$ of the training labels by predicting (nearly) all-zero.
Finally, we see in \autoref{fig:criteo_attack}(b) that advantage of these attacks is very low and can be negative for $\varepsilon \leq 2$. We also evaluate the upper bound in \autoref{thm:excess_adv_upper}, where the quantity $\bbE_{\by_i|X_i}\left[\sup_{y\in\calY}u(y, \by_i)\right] = 1$ by construction of $u$ (\emph{cf.} \autoref{equ:reweighting}). Although this bound again dominates the computed advantage, there exists a very large gap. We suspect this is due to a number of reasons, including the SPA attack being sub-optimal or due to looseness in label-DP accounting. Future work may be able to design better LIAs that exploit model memorization in other ways or use tighter label-DP accounting to reduce this gap.

\section{DISCUSSION AND CONCLUSION}
\label{sec:conclusion}

\citet{busa2021pitfalls} first noted the fact that even with the label-DP guarantee, an adversary can still recover the training labels via the simple prediction attack. Their Bayesian interpretation of this paradox is that the public release of features contributed to this privacy loss and that label-DP cannot mitigate this risk. We offer a different view and advocate that label leakage in absolute terms should not be considered a privacy violation. Instead, with the appropriate metric of success for the adversary, \emph{i.e.}, advantage, we showed that label-DP can indeed prevent label inference attacks. We hope that future work can build upon our analysis to deepen our understanding of the connection between label-DP and LIAs. Lastly, we do not foresee any negative societal impacts for our work.

\paragraph{Limitations.} Our paper focuses on deriving LIA advantage bounds for $(\epsilon,\delta)$-label-DP. Alternative formulations of DP such as R\'{e}nyi-DP~\citep{mironov2017renyi} and Gaussian DP~\citep{dong2019gaussian} offer much tighter privacy accounting for composition of mechanisms, and hence it may be of interest to derive similar bounds under these accounting frameworks. Moreover, our evaluation on the Criteo dataset is only a preliminary study. Follow-up work on thoroughly analyzing label-DP algorithms and studying their limitations on such challenging datasets is warranted for a better understanding of the privacy-utility trade-offs.

\subsubsection*{Acknowledgements}
RW, JPZ and KQW are supported by grants from the National Science Foundation NSF (IIS-2107161, III-1526012, IIS-1149882, and IIS-1724282), and the Cornell Center for Materials Research with funding from the NSF MRSEC program (DMR-1719875), and SAP America.


\renewcommand{\bibsection}{\subsubsection*{References}}
\bibliography{main}

\begin{thebibliography}{}

\bibitem[Bun et~al., 2021]{bun2021statistical}
Bun, M., Desfontaines, D., Dwork, C., Naor, M., Nissim, K., Roth, A., Smith,
  A., Steinke, T., Ullman, J., and Vadhan, S. (2021).
\newblock Statistical inference is not a privacy violation.
\newblock
  \url{https://differentialprivacy.org/inference-is-not-a-privacy-violation/},
  Last accessed on 2022-05-19.

\bibitem[Busa-Fekete et~al., 2021]{busa2021pitfalls}
Busa-Fekete, R.~I., Syed, U., Vassilvitskii, S., et~al. (2021).
\newblock On the pitfalls of label differential privacy.
\newblock In {\em NeurIPS 2021 Workshop LatinX in AI}.

\bibitem[Carlini et~al., 2019]{carlini2019secret}
Carlini, N., Liu, C., Erlingsson, {\'U}., Kos, J., and Song, D. (2019).
\newblock The secret sharer: Evaluating and testing unintended memorization in
  neural networks.
\newblock In {\em 28th USENIX Security Symposium (USENIX Security 19)}, pages
  267--284.

\bibitem[Chapelle et~al., 2014]{chapelle2014simple}
Chapelle, O., Manavoglu, E., and Rosales, R. (2014).
\newblock Simple and scalable response prediction for display advertising.
\newblock {\em ACM Transactions on Intelligent Systems and Technology (TIST)},
  5(4):1--34.

\bibitem[Chaudhuri and Hsu, 2011]{pmlr-v19-chaudhuri11a}
Chaudhuri, K. and Hsu, D. (2011).
\newblock Sample complexity bounds for differentially private learning.
\newblock In Kakade, S.~M. and von Luxburg, U., editors, {\em Proceedings of
  the 24th Annual Conference on Learning Theory}, volume~19 of {\em Proceedings
  of Machine Learning Research}, pages 155--186, Budapest, Hungary. PMLR.

\bibitem[Dong et~al., 2019]{dong2019gaussian}
Dong, J., Roth, A., and Su, W.~J. (2019).
\newblock Gaussian differential privacy.
\newblock {\em arXiv preprint arXiv:1905.02383}.

\bibitem[Dorogush et~al., 2018]{dorogush2018catboost}
Dorogush, A.~V., Ershov, V., and Gulin, A. (2018).
\newblock Catboost: gradient boosting with categorical features support.
\newblock {\em arXiv preprint arXiv:1810.11363}.

\bibitem[Dwork et~al., 2006]{dwork2006calibrating}
Dwork, C., McSherry, F., Nissim, K., and Smith, A. (2006).
\newblock Calibrating noise to sensitivity in private data analysis.
\newblock In {\em Theory of cryptography conference}, pages 265--284. Springer.

\bibitem[Dwork et~al., 2014]{dwork2014algorithmic}
Dwork, C., Roth, A., et~al. (2014).
\newblock The algorithmic foundations of differential privacy.
\newblock {\em Found. Trends Theor. Comput. Sci.}, 9(3-4):211--407.

\bibitem[Esmaeili et~al., 2021]{esmaeili2021antipodes}
Esmaeili, M.~M., Mironov, I., Prasad, K., Shilov, I., and Tramer, F. (2021).
\newblock Antipodes of label differential privacy: {PATE} and {ALIBI}.
\newblock In {\em Thirty-Fifth Conference on Neural Information Processing
  Systems}.

\bibitem[Fredrikson et~al., 2015]{fredrikson2015model}
Fredrikson, M., Jha, S., and Ristenpart, T. (2015).
\newblock Model inversion attacks that exploit confidence information and basic
  countermeasures.
\newblock In {\em Proceedings of the 22nd ACM SIGSAC conference on computer and
  communications security}, pages 1322--1333.

\bibitem[Friedman, 2001]{friedman2001greedy}
Friedman, J.~H. (2001).
\newblock Greedy function approximation: a gradient boosting machine.
\newblock {\em Annals of statistics}, pages 1189--1232.

\bibitem[Ghazi et~al., 2021]{ghazi2021deep}
Ghazi, B., Golowich, N., Kumar, R., Manurangsi, P., and Zhang, C. (2021).
\newblock Deep learning with label differential privacy.
\newblock In {\em Thirty-Fifth Conference on Neural Information Processing
  Systems}.

\bibitem[Ghosh et~al., 2012]{ghosh2012universally}
Ghosh, A., Roughgarden, T., and Sundararajan, M. (2012).
\newblock Universally utility-maximizing privacy mechanisms.
\newblock {\em SIAM Journal on Computing}, 41(6):1673--1693.

\bibitem[Kairouz et~al., 2015]{kairouz2015composition}
Kairouz, P., Oh, S., and Viswanath, P. (2015).
\newblock The composition theorem for differential privacy.
\newblock In {\em International conference on machine learning}, pages
  1376--1385. PMLR.

\bibitem[Kasiviswanathan and Smith, 2014]{kasiviswanathan2014semantics}
Kasiviswanathan, S.~P. and Smith, A. (2014).
\newblock On the'semantics' of differential privacy: A bayesian formulation.
\newblock {\em Journal of Privacy and Confidentiality}, 6(1).

\bibitem[Krizhevsky et~al., 2009]{krizhevsky2009learning}
Krizhevsky, A., Hinton, G., et~al. (2009).
\newblock Learning multiple layers of features from tiny images.

\bibitem[LeCun et~al., 1998]{lecun1998gradient}
LeCun, Y., Bottou, L., Bengio, Y., and Haffner, P. (1998).
\newblock Gradient-based learning applied to document recognition.
\newblock {\em Proceedings of the IEEE}, 86(11):2278--2324.

\bibitem[McMahan et~al., 2013]{mcmahan2013ad}
McMahan, H.~B., Holt, G., Sculley, D., Young, M., Ebner, D., Grady, J., Nie,
  L., Phillips, T., Davydov, E., Golovin, D., et~al. (2013).
\newblock Ad click prediction: a view from the trenches.
\newblock In {\em Proceedings of the 19th ACM SIGKDD international conference
  on Knowledge discovery and data mining}, pages 1222--1230.

\bibitem[McSherry, 2016]{mcsherry2016statistical}
McSherry, F. (2016).
\newblock Statistical inference considered harmful.
\newblock
  \url{https://github.com/frankmcsherry/blog/blob/master/posts/2016-06-14.md},
  Last accessed on 2022-05-19.

\bibitem[Mironov, 2017]{mironov2017renyi}
Mironov, I. (2017).
\newblock R{\'e}nyi differential privacy.
\newblock In {\em 2017 IEEE 30th Computer Security Foundations Symposium
  (CSF)}, pages 263--275. IEEE.

\bibitem[Papernot et~al., 2016]{papernot2016semi}
Papernot, N., Abadi, M., Erlingsson, U., Goodfellow, I., and Talwar, K. (2016).
\newblock Semi-supervised knowledge transfer for deep learning from private
  training data.
\newblock {\em arXiv preprint arXiv:1610.05755}.

\bibitem[Ricci et~al., 2011]{ricci2011introduction}
Ricci, F., Rokach, L., and Shapira, B. (2011).
\newblock Introduction to recommender systems handbook.
\newblock In {\em Recommender systems handbook}, pages 1--35. Springer.

\bibitem[Richardson et~al., 2007]{richardson2007predicting}
Richardson, M., Dominowska, E., and Ragno, R. (2007).
\newblock Predicting clicks: estimating the click-through rate for new ads.
\newblock In {\em Proceedings of the 16th international conference on World
  Wide Web}, pages 521--530.

\bibitem[Shokri et~al., 2017]{shokri2017membership}
Shokri, R., Stronati, M., Song, C., and Shmatikov, V. (2017).
\newblock Membership inference attacks against machine learning models.
\newblock In {\em 2017 IEEE Symposium on Security and Privacy (SP)}, pages
  3--18. IEEE.

\bibitem[Sohn et~al., 2020]{sohn2020fixmatch}
Sohn, K., Berthelot, D., Carlini, N., Zhang, Z., Zhang, H., Raffel, C.~A.,
  Cubuk, E.~D., Kurakin, A., and Li, C.-L. (2020).
\newblock Fixmatch: Simplifying semi-supervised learning with consistency and
  confidence.
\newblock {\em Advances in Neural Information Processing Systems}, 33.

\bibitem[Sun et~al., 2022]{sun2022label}
Sun, J., Yang, X., Yao, Y., and Wang, C. (2022).
\newblock Label leakage and protection from forward embedding in vertical
  federated learning.
\newblock {\em arXiv preprint arXiv:2203.01451}.

\bibitem[Tallis and Yadav, 2018]{tallis2018reacting}
Tallis, M. and Yadav, P. (2018).
\newblock Reacting to variations in product demand: An application for
  conversion rate (cr) prediction in sponsored search.
\newblock In {\em 2018 IEEE International Conference on Big Data (Big Data)},
  pages 1856--1864. IEEE.

\bibitem[Warner, 1965]{warner1965randomized}
Warner, S.~L. (1965).
\newblock Randomized response: A survey technique for eliminating evasive
  answer bias.
\newblock {\em Journal of the American Statistical Association},
  60(309):63--69.

\bibitem[Zhang et~al., 2021]{zhang2021learning}
Zhang, M., Lee, J., and Agarwal, S. (2021).
\newblock Learning from noisy labels with no change to the training process.
\newblock In {\em International Conference on Machine Learning}, pages
  12468--12478. PMLR.

\bibitem[Zhu et~al., 2019]{zhu2019deep}
Zhu, L., Liu, Z., and Han, S. (2019).
\newblock Deep leakage from gradients.
\newblock {\em Advances in neural information processing systems}, 32.

\end{thebibliography}
\bibliographystyle{apalike}
%
\newpage
\appendix
\onecolumn
\section{Proof of Theorem 3}
\label{app:derivations}

\begin{manualtheorem}{3}
Assume each data $(X_i, \by_i)$ is sampled independently. If $f$ satisfies $(\varepsilon, \delta)$-DP then for any attack algorithm $\calA$, we have:
\begin{equation*}
    \mathrm{Adv}^w\left(\calA, \calK\right) \leq \left(1 - \frac{2}{1 + e^{\varepsilon}} (1 - \delta)\right)\cdot \left(\frac{1}{n}\sum_{i=1}^n\bbE_{\by_i}\left[\sup_{y\in\calY}u(y, \by_i)\right] \right).
\end{equation*}
\end{manualtheorem}

\begin{proof}
The proof follows similarly from proof of Theorem 3. First note that the adversary's inferred label vector $\hat{\by} = \calA(\calM(X, \by))$ is a random variable that depends on both the sampling of training labels $\by$ and randomness in the learning algorithm $\calM$. Then:
\begin{align}
\label{equ:total_expectation_dp}
    \bbE [u(\hat{\by}_i, \by_i) | X_{-i}, \by_{-i}] &= \bbE_{\by_i, X_i}\left[\bbE [u(\hat{\by}_i, \by_i) |\by, X]\right],
\end{align}
where the equality holds by the assumption that $\by_i$ is independent of $X_{-i}$ and $\by_{-i}$. 
By DP, we have:
    \begin{align*}
        \bbE [u(\hat{\by}_i, \by_i) |\by, X]
        &=\int_0^{B(\by_i)} \bbP(u(\hat{\by}_i, \by_i) > v |\by, X) \diff v\\
        &\leq\int_0^{B(\by_i)} \left(\bbP(u(\hat{\by}_i', \by_i) > v |\by, X) + \left(1 - \frac{2}{1 + e^{\varepsilon}} (1 - \delta)\right)\right) \diff v\\
        &= \bbE [u\left(\hat{\by}_i', \by_i\right) |\by, X] +  \left(1 - \frac{2}{1 + e^{\varepsilon}} (1 - \delta)\right)\cdot B(\by_i),
    \end{align*}
where the inequality once again follows the Remark A.1 in \cite{kairouz2015composition}. Substituting the above inequality into \autoref{equ:total_expectation_dp} gives:
\begin{align*}
\bbE [u(\hat{\by}_i, \by_i) | X_{-i}, \by_{-i}]
&\leq \bbE\left[u\left(\hat{\by}_i', \by_i\right)|X_{-i}, \by_{-i}\right] +  \left(1 - \frac{2}{1 + e^{\varepsilon}} (1 - \delta)\right)\cdot \bbE_{\by_i, X_i}\left[B(\by_i)\right]\\
&\leq \max_{y\in\calY} \bbE_{\by_i, X_i}[u(y, \by_i)] +  \left(1 - \frac{2}{1 + e^{\varepsilon}} (1 - \delta)\right)\cdot \bbE_{\by_i, X_i}\left[B(\by_i)\right],
\end{align*}
where the last inequality holds by the fact that $\hat{\by}_i'$ is independent of $\by_i$ conditioned on $X_{-i}$ and $\by_{-i}$.
Finally, we can derive our bound for the advantage $\mathrm{Adv}\left(\calA, \calK\right)$:\\
\resizebox{\linewidth}{!}{
  \begin{minipage}{\linewidth}
\begin{align*}
 \mathrm{EAU}\left(\calA, \calK\right) 
    - \frac{1}{n}\sum_{i=1}^n\max_{y\in \calY}\bbE_{\by_i} \left[u(y, \by_i)\right]
 &=\frac{1}{n}\sum_{i=1}^n\left(\bbE_{\by_{-i}}\bbE [u(\hat{\by}_i, \by_i) | X_{-i}, \by_{-i}] - \max_{y\in \calY}\bbE_{\by_i} \left[u(y, \by_i)\right]\right)\\
 &\leq   \left(1 - \frac{2}{1 + e^{\varepsilon}} (1 - \delta) \right)\cdot \left(\frac{1}{n}\sum_{i=1}^n\bbE_{\by_i}\left[\sup_{y\in\calY}u(y, \by_i)\right] \right).
\end{align*}
  \end{minipage}
}
\end{proof}

\section{Experiment Details}\label{app:exp}

The implementation details of label-DP algorithms for Criteo CTR prediction dataset are listed as below.
\begin{enumerate}[leftmargin=*,nosep]
    \item LP-1ST (RR): LP-1ST is a one-stage version of LP-MST. Notice that it is equivalent to Randomized Response (RR)~\cite{warner1965randomized}, which flips each training label to other labels uniformly to satisfy the label-DP.
    \item LP-1ST ({\small domain prior}): We follow the Algorithm 4 in \cite{ghazi2021deep} to compute the prior label distribution for each data and feed these prior distributions to LP-1ST. We set the number of clusters as 100. Before the clustering, we encode all categorical features into one-hot representations and normalize integer features into the range $[0, 1]$.
    \item LP-1ST ({\small noise correction}): LP-1ST injects uniform noise to the training labels before the training of gradient boost. We can additionally adapt a post-training noise correction method in \cite{zhang2021learning} for LP-1ST to achieve better performance.
    \item LP-2ST: LP-2ST is the two-stage version of LP-MST.
    \item PATE: 
    We make two adaptions for the original PATE to keep the information of those minority labels of $1$:
    \begin{itemize}[leftmargin=*,nosep]
        \item When a trained teacher predicts the label for a data point, instead of outputting the label that has maximum probability score, we sample a label from the output probability vector.
        \item When we aggregate the prediction from each teacher for student's training, instead of outputting the label with maximum count in the noisy histogram of teacher's predictions, we again sample a label from the probability, normalized from the noisy prediction histogram.
    \end{itemize}
    Without the above two adaptions, as we empirically verified, all aggregated labels would be 0 and the student model would be meaningless. Moreover, we perform data-independent privacy cost accounting following~\cite{papernot2016semi} and obtain different $\varepsilon$ by varying number of queries with fixed noise level.
\end{enumerate}

\end{document}